\theoremstyle{plain}
\newtheorem{theorem}{Theorem}
\newtheorem{lemma}[theorem]{Lemma}
\theoremstyle{definition}
\title{On the clustering behavior of sliding windows}
\author{Boris Alexeev \and Wenyan Luo\thanks{Department of Mathematics, The Ohio State University, Columbus, OH} \and Dustin G.\ Mixon\footnotemark[1] \thanks{Translational Data Analytics Institute, The Ohio State University, Columbus, OH} \and Yan X Zhang\thanks{Department of Mathematics and Statistics, San Jos\'{e} State University, San Jose, CA}}
\date{}
\begin{document}
\maketitle

\begin{abstract}
Things can go spectacularly wrong when clustering timeseries data that has been preprocessed with a sliding window.
We highlight three surprising failures that emerge depending on how the window size compares with the timeseries length.
In addition to computational examples, we present theoretical explanations for each of these failure modes.
\end{abstract}

\section{Introduction}

Clustering is one of the most common tasks in data science, and given the ubiquity of timeseries data, one is naturally inclined to cluster it.
In order to perform Euclidean clustering (such as $k$-means clustering) on timeseries data, one must first map the data into Euclidean space.
This is traditionally accomplished with a \textit{sliding window}.
Explicitly, given a timeseries
\[
\left[\begin{array}{ccccc}
x(0) & x(1) & x(2) & \cdots & x(m-1)
\end{array}\right]
\]
and a nonnegative integer parameter $w$, a sliding window of length $w$ transforms the timeseries into $m-w+1$ vectors in $\mathbb{R}^w$, namely, the columns of the matrix
\[
\left[\begin{array}{ccccc}
x(0) & x(1) & x(2) & \cdots & x(m-w)\\
x(1) & x(2) & x(3) & \cdots & x(m-w+1)\\
x(2) & x(3) & x(4) & \cdots & x(m-w+2)\\
\vdots & \vdots & \vdots & & \vdots \\
x(w-1) & x(w) & x(w+1) & \cdots & x(m-1)
\end{array}\right].
\]
Naively, this seems like a reasonable approach to analyze timeseries data, but it turns out that for whatever reason, things tend to go awry with this data science pipeline.
Such problems were reported by Keogh and Lin in their provocative paper ``Clustering time-series sequences is meaningless''~\cite{KeoghL:05}.
They identified that in many cases, the clustering is \textit{meaningless} in the sense that a randomly initialized $k$-means algorithm returns a highly variable set of centroids.

Inspired by Keogh and Lin~\cite{KeoghL:05}, we performed our own investigation into clustering sliding windows of timeseries data, and we were shocked by what we observed.
In some cases, you can predict the centroid means from the \textit{multiset} of timeseries values (i.e., ``time'' doesn't matter).
In other cases, you can predict how the windows will cluster \textit{without even seeing the data}.
Which of these behaviors emerge depends on how the window size $w$ compares with the timeseries length $m$.
We also resolved the mysterious emergence of sine waves in one of the experiments presented by Keogh and Lin~\cite{KeoghL:05}.
We present our findings in the next three sections before concluding with a discussion in Section~5.

While Keogh and Lin~\cite{KeoghL:05} considered both $k$-means clustering and hierarchical clustering, for theory reasons, we focus on $k$-means clustering and spectral clustering. 
To perform spectral clustering, first run principal component analysis to find the most representative $(k-1)$-dimensional affine subspace, and then project the data onto that affine subspace before solving $k$-means for the projected data.
It turns out that spectral clustering is a $2$-approximation algorithm for $k$-means~\cite{DrineasFKVV:04}, and empirically, spectral clustering--optimal centroids are very similar to the $k$-means-optimal centroids.

Throughout, we $0$-index our timeseries, and we denote $[p]:=\{0,\ldots,p-1\}$.
Also, timeseries have length $m$, windows have size $w$, $n=m-w+1$ denotes the total number of windows, and $k$ denotes the desired number of clusters.

\section{Small windows have flat cluster centroids}

The technology company Apple went public on December 12, 1980 under the ticker symbol AAPL, and today on Kaggle, one can access the closing price of this stock for every trading day over the subsequent 42 years~\cite{Mooney:online}.
In light of the exponential growth in this stock, we take a logarithm to obtain the following timeseries of length $10590$:
\begin{center}
\includegraphics[width=0.7\textwidth,trim={2.5cm 11.5cm 2.5cm 11.5cm},clip]{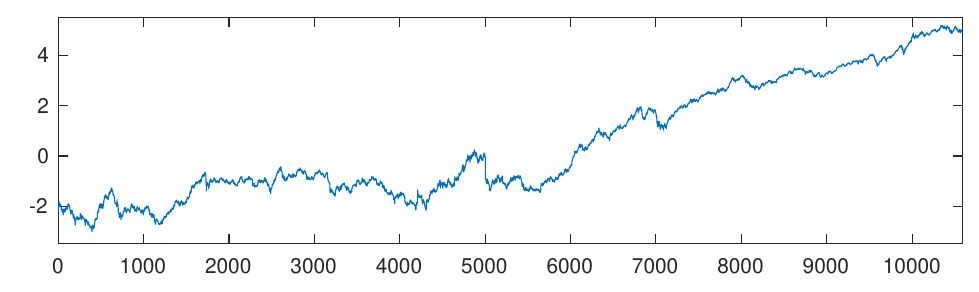}
\end{center}
Running $k$-means clustering with $k=10$ on all windows of length $100$ from this timeseries then results in the following centroids:
\begin{center}
\includegraphics[width=0.7\textwidth,trim={2.5cm 11.5cm 2.5cm 11.5cm},clip]{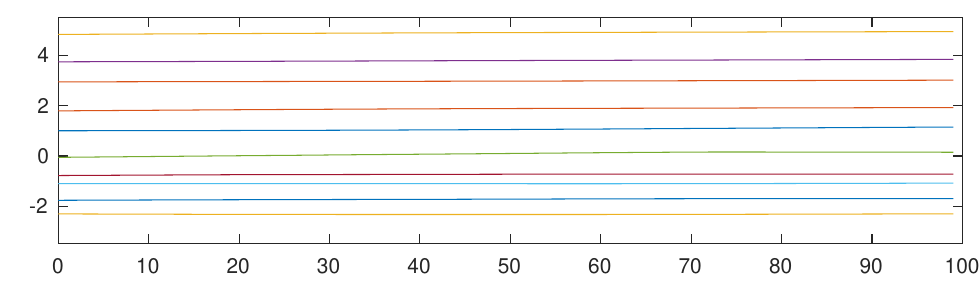}
\end{center}
Every centroid is nearly constant!
Instead of illustrating typical fluctuations in the timeseries over the course of $100$ timesteps, these centroids essentially capture how the individual timeseries values cluster together.
In this section, we show that this ``flat centroid'' phenomenon emerges as a result of the window length being ``too small'' relative to the full timeseries length.
For theory reasons, we focus on spectral clustering with only $k=2$ clusters.

As is often the case in timeseries analysis, zero padding will serve as a convenient preprocessing step in what follows.
To be explicit, the $(w-1)$-zero-padding of $x$ is defined to be the timeseries $\tilde{x}$ obtained by appending $w-1$ zeros to both sides of $x$:
\begin{center}
\footnotesize
\begin{tabular}{|c|cccc|cccc|cccc|}
\hline
$t$ & $0$ & $1$ & $\cdots$ & $w-2$ & $w-1$ & $w$ & $\cdots$ & $m+w-2$ & $m+w-1$ & $m+w$ & $\cdots$ & $m+2w-3$\\\hline
$\tilde{x}(t)$ & $0$ & $0$ & $\cdots$ & $0$ & $x(0)$ & $x(1)$ & $\cdots$ & $x(m-1)$ & $0$ & $0$ & $\cdots$ & $0$\\\hline
\end{tabular}
\normalsize
\end{center}
Notably, the left-most $w$-window of $\tilde{x}$ contains $x(0)$, while the right-most window contains $x(m-1)$.
We denote $\mathsf{pad}:=\{0,1,\ldots,w-2\}\cup\{m+w-1,m+w,\ldots,m+2w-3\}$.

\begin{theorem}
\label{thm.flat centroids}
Given a timeseries $x$ of length $m$, let $\mathcal{X}$ denote the multiset of all $w$-windows of the $(w-1)$-zero-padding of $x$.
Then the following hold:
\begin{itemize}
\item[(a)]
The centroid $\mu$ of $\mathcal{X}$ is a scalar multiple of the all-ones vector $1$.
\item[(b)]
For $k=2$, the spectral clustering--optimal centroids $\mu_1$ and $\mu_2$ of $\mathcal{X}$ both satisfy
\[
\frac{\|P_{1^\perp}\mu_i\|_2^2}{\|\mu_i-\mu\|_2^2}
\leq\frac{8w^2\|x\|_\infty^2+w^3m\|x\|_{\operatorname{Lip}}^2}{\|x-\overline{x}1\|_2^2},
\]
where $P_{1^\perp}$ is the orthogonal projection onto the orthogonal complement of the span of $1$, $\overline{x}$ denotes the average coordinate of $x$, and $\displaystyle\|x\|_{\operatorname{Lip}}:=\max_{t\in[m-1]}|x(t+1)-x(t)|$.
\end{itemize}
\end{theorem}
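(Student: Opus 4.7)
For \emph{part (a)}, I would directly compute $\mu=\tfrac{1}{n}\sum_{j=0}^{n-1}X_j$ (with $n=m+w-1$): the $i$-th coordinate is $\tfrac{1}{n}\sum_{j=0}^{n-1}\tilde{x}(j+i)$, a sum over $n$ consecutive indices of $\tilde{x}$. For every $i\in\{0,\ldots,w-1\}$, the index range $\{i,\ldots,i+n-1\}$ contains the entire data support $\{w-1,\ldots,m+w-2\}$, so the sum equals $\sum_{t=0}^{m-1}x(t)=m\overline{x}$ independently of $i$. Hence $\mu=\tfrac{m\overline{x}}{n}\cdot 1$.

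For \emph{part (b)}, the plan is to first reduce to bounding $\|P_{1^\perp}v_1\|_2^2$, where $v_1$ is the top principal component of $\mathcal{X}$. Since $k=2$ spectral clustering projects the data onto the affine line $\mu+\mathbb{R}v_1$ before running $1$D $k$-means, each centroid has the form $\mu_i=\mu+c_iv_1$, and part (a) gives $P_{1^\perp}\mu=0$, so $\|P_{1^\perp}\mu_i\|_2^2/\|\mu_i-\mu\|_2^2=\|P_{1^\perp}v_1\|_2^2$. To bound this, let $\Sigma=\tfrac{1}{n}\sum_j(X_j-\mu)(X_j-\mu)^\top$ have eigenpairs $(\lambda_1,u_1),(\lambda_2,u_2),\ldots$ with $u_1=v_1$, and set $\hat{1}=w^{-1/2}\cdot 1$ and $\sigma_1=\hat{1}^\top\Sigma\hat{1}$. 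Expanding $\hat{1}=\sum_i\alpha_i u_i$ and noting $\sigma_1=\sum_i\alpha_i^2\lambda_i\leq\lambda_1\alpha_1^2+\lambda_2(1-\alpha_1^2)$ yields
\[
\|P_{1^\perp}v_1\|_2^2 = 1-\alpha_1^2 \leq \frac{\lambda_1-\sigma_1}{\lambda_1-\lambda_2}.
\]
Setting $S_\perp:=\sum_j\|P_{1^\perp}X_j\|_2^2$, the identity $\operatorname{tr}\Sigma=\sigma_1+S_\perp/n$ together with $\lambda_1\geq\sigma_1$ (Rayleigh) forces $\lambda_1-\sigma_1\leq S_\perp/n$ and $\lambda_2\leq S_\perp/n$, whence (when $n\sigma_1>S_\perp$)
\[
\|P_{1^\perp}v_1\|_2^2 \leq \frac{S_\perp}{n\sigma_1-S_\perp}.
\]

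It remains to prove $S_\perp\leq\tfrac{1}{4}N$ and $n\sigma_1\geq\|x-\overline{x}1\|_2^2$, where $N:=8w^2\|x\|_\infty^2+w^3m\|x\|_{\operatorname{Lip}}^2$; these together give $\frac{S_\perp}{n\sigma_1-S_\perp}\leq\frac{N/4}{3\|x-\overline{x}1\|_2^2/4}\leq\frac{N}{\|x-\overline{x}1\|_2^2}$ provided $\|x-\overline{x}1\|_2^2\geq N$, while the complementary case makes the stated inequality trivial. For $S_\perp$, since $\|P_{1^\perp}X_j\|_2^2=w\cdot\operatorname{Var}(X_j)$, the $2(w-1)$ boundary windows contribute at most $2(w-1)w\|x\|_\infty^2$ (entries lie in $\{0\}\cup[-\|x\|_\infty,\|x\|_\infty]$) and the interior windows contribute at most $(m-w+1)\cdot w(w-1)^2\|x\|_{\operatorname{Lip}}^2/4$ (range within any window is at most $(w-1)\|x\|_{\operatorname{Lip}}$), fitting under $N/4$. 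The main obstacle will be the lower bound $\sum_j(s_j-\bar{s})^2\geq w\|x-\overline{x}1\|_2^2$ with $s_j$ the $j$-th window sum and $\bar{s}=wm\overline{x}/n$. To attack this I would expand $\sum_j s_j^2=\sum_{|\tau|<w}(w-|\tau|)R(\tau)$ with $R$ the zero-padded autocorrelation of $x-\overline{x}1$, write $R(\tau)=R(0)-\Delta(\tau)$, and bound $|\Delta(\tau)|\leq|\tau|\|x\|_\infty^2+|\tau|\sqrt{m}\|x-\overline{x}1\|_2\|x\|_{\operatorname{Lip}}$ by isolating a tail term and applying Cauchy--Schwarz to the Lipschitz displacement $\sum_t x(t)[x(t)-x(t+\tau)]$. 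This yields $\sum_j s_j^2\geq w^2\|x-\overline{x}1\|_2^2-\tfrac{w^3}{3}(\|x\|_\infty^2+\sqrt{m}\|x-\overline{x}1\|_2\|x\|_{\operatorname{Lip}})$, and a weighted AM--GM on the cross term absorbs the correction precisely when $\|x-\overline{x}1\|_2^2\geq N$; additional bookkeeping of $\bar{s}$ and of the noncentered shift handles the rest. The delicate step is ensuring every correction term is subsumed into $N$ with the stated constants $8$ and $1$.
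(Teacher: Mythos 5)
Your part (a) is correct and is exactly the paper's argument. Your part (b) shares the paper's skeleton (reduce to $\|P_{1^\perp}v_1\|_2^2$ for the top principal direction, bound a ``perpendicular mass'' numerator against a spectral denominator), and two of your ingredients are solid improvements in self-containedness: the Rayleigh-quotient/trace gap bound $1-\alpha_1^2\leq(\lambda_1-\sigma_1)/(\lambda_1-\lambda_2)$, combined with $\lambda_1-\sigma_1\leq S_\perp/n$ and $\lambda_2\leq S_\perp/n$, is a correct elementary substitute for the paper's appeal to Wedin's $\sin\Theta$ theorem, and your Popoviciu-style bound $S_\perp\leq N/4$ is correct (indeed a factor $4$ tighter than the paper's numerator bound, which the paper gets by averaging pairwise differences within each window). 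The genuine divergence---and the weak point---is the denominator. The paper lower-bounds $\sigma_1(B)^2$ in one line by the squared norm of a single \emph{row} of $B$: the first row of $X$ contains every entry of $x$ (plus zeros), so $\sigma_1(B)^2\geq\|e_1^\top B\|_2^2\geq\|x-\overline{x}1\|_2^2$, unconditionally. By anchoring your gap bound at $\sigma_1=\hat{1}^\top\Sigma\hat{1}$, you instead must lower-bound $n\sigma_1=\frac{1}{w}\sum_j(s_j-\bar{s})^2$, the variance of window sums, which is a genuinely smaller quantity than $n\lambda_1=\sigma_1(B)^2$; in fact your stated goal $n\sigma_1\geq\|x-\overline{x}1\|_2^2$ is \emph{false} in general (take $x(t)=(-1)^t$ and $w=2$: all interior window sums vanish, so $n\sigma_1=O(1)$ while $\|x-\overline{x}1\|_2^2=m$). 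You are evidently aware that the claim must be conditioned on $\|x-\overline{x}1\|_2^2\geq N$, and I checked that your autocorrelation plan can then be closed---e.g.\ via $R(0)-R(\tau)=\frac{1}{2}\sum_u\bigl(\tilde{y}(u)-\tilde{y}(u+\tau)\bigr)^2\leq 4|\tau|\|x\|_\infty^2+\frac{m\tau^2}{2}\|x\|_{\operatorname{Lip}}^2$, whose weighted sum over $|\tau|<w$ is at most $\frac{w}{4}\|x-\overline{x}1\|_2^2$ under the standing assumption, plus boundary bookkeeping for $\bar{s}$, with the factor-$3$ slack you reserved covering everything for $w\geq 2$ (and $w=1$ being trivial). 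But note your constants as written do slip: $\|x-\overline{x}1\|_\infty\leq 2\|x\|_\infty$ forces $4|\tau|\|x\|_\infty^2$, not $|\tau|\|x\|_\infty^2$, in the $\Delta(\tau)$ bound. So the proposal is completable, but the hard section is only sketched and is an order of magnitude more work than what it replaces.

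There is a clean repair that stays entirely inside your framework and deletes the autocorrelation analysis: keep your gap lemma, but feed it the paper's row trick. Retain $\lambda_1-\sigma_1\leq S_\perp/n$ and $\lambda_2\leq S_\perp/n$ (the latter still uses $\lambda_1\geq\sigma_1$), but lower-bound the gap by $\lambda_1-\lambda_2\geq\frac{1}{n}\bigl(\|e_1^\top B\|_2^2-S_\perp\bigr)$, using $\lambda_1=\frac{1}{n}\sigma_1(B)^2\geq\frac{1}{n}\|e_1^\top B\|_2^2$. This yields $1-\alpha_1^2\leq S_\perp/\bigl(\|e_1^\top B\|_2^2-S_\perp\bigr)$, and since the first row of $B$ is $\tilde{x}-\mu(0)$ over all indices, $\|e_1^\top B\|_2^2\geq\sum_t\bigl(x(t)-\mu(0)\bigr)^2\geq\|x-\overline{x}1\|_2^2$. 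Together with $S_\perp\leq N/4$ this finishes exactly as in your conditional/trivial case split---a fully elementary proof, no Wedin citation and no window-sum variance estimate needed.
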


Part (a) above is a strengthening of Theorem~1 in~\cite{KeoghL:05} that holds in our special case.
Before proving Theorem~\ref{thm.flat centroids}, we evaluate part (b) with a couple of standard examples of $x$.
If $x$ is a random walk, then we expect $\|x\|_\infty\asymp\sqrt{m}$, $\|x\|_{\operatorname{Lip}}=1$, and $\|x-\overline{x}1\|_2\asymp m$, and so the right-hand side in (b) above is on the order of $\frac{w^3}{m}$, which is small when $w\ll m^{1/3}$.
Meanwhile, if $x=\operatorname{sin}\frac{2\pi\cdot}{p}$ for some period $p\gg1$, then $\|x\|_\infty\asymp1$, $\|x\|_{\operatorname{Lip}}\asymp\frac{1}{p}$, and $\|x-\overline{x}1\|_2\asymp\sqrt{m}$, and so our error estimate is on the order of $\frac{w^2}{m}+\frac{w^3}{p}$, which is small when $w\ll\min\{m^{1/2},p^{1/3}\}$.

\begin{proof}[Proof of Theorem~\ref{thm.flat centroids}]
For (a), since $\mathcal{X}$ consists of all $w$-windows of the $(w-1)$-zero-padding of $x$, each coordinate of $\mu$ is the average of the $m$ coordinates of $x$ along with $w-1$ zeros.
For (b), note that $\mu_1$ and $\mu_2$ both reside in the $1$-dimensional affine subspace delivered by principal component analysis.
Accordingly, let $X$ denote the matrix of column vectors from $\mathcal{X}$, center these column vectors by taking $B:=X-\mu1^\top$, and let $u$ denote the top left-singular vector of $B$.
Since there necessarily exist $c_1,c_2\in\mathbb{R}$ such that $\mu_i=\mu+c_i u$ for each $i$, it follows that
\[
\frac{\|P_{1^\perp}\mu_i\|_2^2}{\|\mu_i-\mu\|_2^2}
=\|P_{1^\perp}u\|_2^2
=1-\langle u,\tfrac{1}{\sqrt{w}}1\rangle^2
\leq\frac{\|B-\frac{1}{w}11^\top B\|_F^2}{\sigma_{1}(B)^2}
=\frac{\|P_{1^\perp}X\|_F^2}{\sigma_{1}(B)^2},
\]
where the inequality follows from Wedin's $\sin\Theta$ theorem~\cite{Wedin:72}.
We estimate the denominator by
\[
\sigma_{1}(B)^2
\geq\|e_1^\top B\|_2^2
\geq\|x-\overline\mu1\|_2^2
\geq\|x-\overline{x}1\|_2^2.
\]
Next, we estimate the numerator in terms of the $(w-1)$-zero-padding $\tilde{x}$ of $x$:
\[
\|P_{1^\perp}X\|_F^2
=\sum_{s=0}^{m+w-2} \sum_{t=0}^{w-1}\bigg(\tilde{x}(s+t)-\frac{1}{w}\sum_{t'=0}^{w-1}\tilde{x}(s+t')\bigg)^2
\leq\sum_{s=0}^{m+w-2} \sum_{t=0}^{w-1}\bigg(\frac{1}{w}\sum_{t'=0}^{w-1}\Big|\tilde{x}(s+t)-\tilde{x}(s+t')\Big|\bigg)^2.
\]
Considering
\[
\Big|\tilde{x}(s+t)-\tilde{x}(s+t')\Big|
\leq\left\{\begin{array}{cl}
2\|x\|_\infty&\text{if }s\in\mathsf{pad} \text{ or } s+w-1\in\mathsf{pad}\\
(w-1)\|x\|_{\operatorname{Lip}} &\text{else},
\end{array}\right.
\]
the result follows.
\end{proof}

\section{Clustering nearly symmetric data is meaningless}

In his PhD thesis~\cite{Saito:94}, Naoki Saito generated a synthetic dataset to evaluate various classification algorithms.
This dataset, which is available online~\cite{Saito:online}, consists of independent draws from three specially crafted distributions of random timeseries of length $128$, and each distribution is named after the shape of the resulting timeseries: \textit{cylinder}, \textit{bell}, and \textit{funnel}; see Example~4.7 in~\cite{Saito:94} for details.
Running $k$-means on the training set (which consists of $30$ examples) results in the following centroids:
\begin{center}
\includegraphics[width=0.7\textwidth,trim={2.5cm 11.5cm 2.5cm 11.5cm},clip]{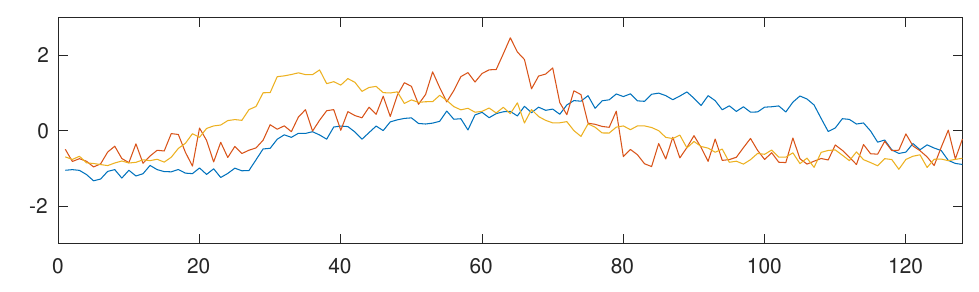}
\end{center}
Let's use the cylinder--bell--funnel dataset to replicate an experiment described in Section~4 of~\cite{KeoghL:05}.
Concatenate the examples in the training set to form a timeseries of length $30\cdot128=3840$, and then run $k$-means clustering with $k=3$ on all windows of length $128$ from this timeseries.
The resulting centroids are depicted below:
\begin{center}
\includegraphics[width=0.7\textwidth,trim={2.5cm 11.5cm 2.5cm 11.5cm},clip]{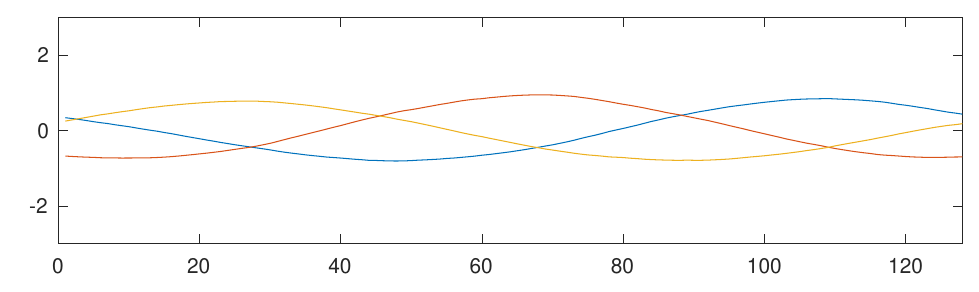}
\end{center}
Every centroid is nearly a sine wave!
This comes as a surprise since none of the examples in the training set exhibit such regularity.
However, the discrete Fourier transform of our $3840$-long timeseries offers a hint for what's going on here:
The dominant sinusoidal contribution has period $128$ (i.e., $w$), with all others being much smaller by comparison.
In this section, we show that (nearly) sinusoidal centroids emerge in such settings.

The following result is written in terms of a certain notion of distance between subspaces.
The \textit{chordal distance} between two $r$-dimensional subspaces of $\mathbb{R}^d$ is given by
\[
\sqrt{\sin^2\theta_1+\cdots+\sin^2\theta_r},
\]
where $\theta_1,\ldots,\theta_r$ denote the principal angles between these subspaces.
This defines a metric on the Grassmannian space $\operatorname{Gr}(r,\mathbb{R}^d)$.
Note that this distance is small only if all of the principal angles are small, meaning every unit vector in one of the subspaces is close to the other subspace.

\begin{theorem}
\label{thm.sine wave}
Given positive integers $w$ and $p$, consider the timeseries $x$ of length $m:=pw+w-1$ defined by
\[
x(t)
=a + b\sin\frac{2\pi(t-c)}{w}+e(t)
\]
for some $a,b,c\in\mathbb{R}$ and some timeseries $e$ with mean zero.
Let $\mathcal{X}$ denote the multiset of all $w$-windows of $x$.
Then the following hold:
\begin{itemize}
\item[(a)]
The centroid $\mu$ of $\mathcal{X}$ satisfies $\|\mu\|_{\operatorname{Lip}}\leq\frac{2}{pw}\|e\|_\infty$.
\item[(b)]
For $k=3$, the spectral clustering--optimal centroids $\mu_1$, $\mu_2$, and $\mu_3$ of $\mathcal{X}$ all have the property that $\mu_i-\mu$ resides in a $2$-dimensional subspace whose chordal distance from the span of $\cos\frac{2\pi\cdot}{w}$ and $\sin\frac{2\pi\cdot}{w}$ is at most
\[
\Bigg(\frac{|b|}{\sqrt{\frac{4}{pw}\|e\|_2^2+\frac{12}{p}\|e\|_\infty^2}}-1\Bigg)^{-1}
\]
whenever this quantity is positive.
\end{itemize}
\end{theorem}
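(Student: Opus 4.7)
For part (a), I would compute $\mu(t)=\frac{1}{n}\sum_{s=0}^{n-1}x(s+t)$ coordinate-wise. The constant $a$ contributes $a$ to every entry, and since $n=pw$ is an integer multiple of the period $w$, the sum $\sum_{s=0}^{n-1}\sin\frac{2\pi(s+t-c)}{w}$ runs over exactly $p$ complete periods and vanishes. This leaves $\mu(t)=a+\frac{1}{n}\sum_s e(s+t)$, and the telescoping identity $\mu(t+1)-\mu(t)=\frac{1}{n}\bigl(e(n+t)-e(t)\bigr)$ immediately gives $\|\mu\|_{\operatorname{Lip}}\leq\frac{2}{n}\|e\|_\infty=\frac{2}{pw}\|e\|_\infty$.

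For part (b), the plan is to view $X$ as a rank-$2$ signal plus small noise and apply Wedin's $\sin\Theta$ theorem, exactly as in the proof of Theorem~\ref{thm.flat centroids}. Split $x=x_0+e$ with $x_0(t)=a+b\sin\frac{2\pi(t-c)}{w}$, so that $X=X_0+X_e$, and by part (a) the centroid splits as $\mu=a\cdot 1+\mu_e$ with $\mu_e(t)=\frac{1}{n}\sum_s e(s+t)$; hence $B:=X-\mu 1^\top$ decomposes as $B=B_0+B_e$ where $B_0=X_0-a\cdot 1\cdot 1^\top$ and $B_e=X_e-\mu_e 1^\top$. Expanding $\sin\frac{2\pi(t+s-c)}{w}$ via the angle-sum formula gives the explicit factorization
\[
B_0 \;=\; b\,\sin\tfrac{2\pi\cdot}{w}\,v_c^\top + b\,\cos\tfrac{2\pi\cdot}{w}\,v_s^\top,\qquad v_c(s)=\cos\tfrac{2\pi(s-c)}{w},\ v_s(s)=\sin\tfrac{2\pi(s-c)}{w}.
\]
Because $n=pw$ consists of $p$ full periods, the pairs $(\sin\frac{2\pi\cdot}{w},\cos\frac{2\pi\cdot}{w})$ and $(v_c,v_s)$ are each orthogonal with squared norms $w/2$ and $pw/2$ respectively, so $B_0$ has rank $2$, top-$2$ left singular subspace equal to $\operatorname{span}(\sin\frac{2\pi\cdot}{w},\cos\frac{2\pi\cdot}{w})$, and $\sigma_1(B_0)=\sigma_2(B_0)=\frac{|b|w\sqrt{p}}{2}$.

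Since the spectral clustering--optimal centroids satisfy that $\mu_i-\mu$ lies in the top-$2$ left singular subspace of $B$, it remains to bound the chordal distance between that subspace and $\operatorname{span}(\sin\frac{2\pi\cdot}{w},\cos\frac{2\pi\cdot}{w})$. Wedin's $\sin\Theta$ theorem bounds this by $\|B_e\|_F/(\sigma_2(B_0)-\|B_e\|_{\operatorname{op}})$, and substituting $\sigma_2(B_0)=\frac{|b|w\sqrt{p}}{2}$ turns this exactly into $\bigl(\frac{|b|}{\eta}-1\bigr)^{-1}$ with $\eta=\sqrt{\frac{4}{pw}\|e\|_2^2+\frac{12}{p}\|e\|_\infty^2}$, provided $\|B_e\|_F^2\leq w\|e\|_2^2+3w^2\|e\|_\infty^2$ (and similarly for $\|B_e\|_{\operatorname{op}}$, which is dominated by Frobenius). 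I would prove this bound by combining the Hankel-type count $\|X_e\|_F^2\leq w\|e\|_2^2$ (each $e(t')$ appears in at most $w$ entries of $X_e$) with the mean-zero estimate $|\mu_e(t)|\leq\frac{w-1}{n}\|e\|_\infty$ (obtained by writing $\mu_e(t)$ as a deficit of boundary sums of $e$), which yields $\sqrt{n}\,\|\mu_e\|_2\leq\frac{w}{\sqrt{p}}\|e\|_\infty$. The main obstacle is the bookkeeping: combining these two estimates---either through the Pythagorean identity $\|B_e\|_F^2=\|X_e\|_F^2-n\|\mu_e\|_2^2$ or a carefully weighted triangle inequality---so that the coefficients land on the prescribed $1$ and $3$ in the statement.
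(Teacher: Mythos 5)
Your proposal is correct, and its skeleton coincides with the paper's: part (a) is essentially the same telescoping computation, and in part (b) your $B_0$ and $B_e$ are exactly the paper's ``structured'' matrix $A$ and ``small'' matrix $E$, with the same endgame of Wedin's $\sin\Theta$ theorem plus Weyl's inequality. Where you differ is in the two component computations, and both of your choices work --- if anything, more cleanly. First, to identify the singular structure of the signal, the paper shows $(AA^\top)_{s,s'}=\frac{b^2pw}{2}\cos\frac{2\pi(s-s')}{w}$ is circulant and diagonalizes it via the discrete Fourier transform to conclude it is $\frac{b^2pw^2}{4}$ times the projection onto the sinusoidal span; your angle-sum factorization, with orthogonal left factors of norm $\sqrt{w/2}$ and orthogonal right factors of norm $\sqrt{pw/2}$, reads off the same conclusion ($\sigma_1(B_0)=\sigma_2(B_0)=\frac{|b|w\sqrt{p}}{2}$, top left singular subspace equal to the sinusoidal span) with no Fourier analysis. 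Second, for the noise, the paper argues entrywise, $|E_{s,t}|\le|e(s+t)|+\frac{1}{p}\|e\|_\infty$ (this is where the mean-zero hypothesis enters, via $|a-\mu(s)|\le\frac{1}{p}\|e\|_\infty$), and sums to get $\|E\|_F^2\le w\|e\|_2^2+3w^2\|e\|_\infty^2$. The ``main obstacle'' you flag is not actually an obstacle: since $\mu_e(t)$ is precisely the mean of row $t$ of $X_e$, your Pythagorean identity $\|B_e\|_F^2=\|X_e\|_F^2-n\|\mu_e\|_2^2$ holds exactly, and discarding the second term gives $\|B_e\|_F^2\le w\|e\|_2^2$ outright. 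You do not need to reproduce the coefficients $1$ and $3$ from the statement --- you only need to beat them --- because the map $\eta\mapsto\bigl(\frac{|b|}{\eta}-1\bigr)^{-1}$ is increasing on $(0,|b|)$, so your smaller value $\eta=\frac{2}{\sqrt{pw}}\|e\|_2$ yields a bound that implies the stated one. Your noise estimate is thus strictly sharper than the paper's (and, as a bonus, part (b) in your form never uses the mean-zero hypothesis, nor the auxiliary estimate $\sqrt{n}\,\|\mu_e\|_2\le\frac{w}{\sqrt{p}}\|e\|_\infty$ you prepared).
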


Part (a) above is yet another version of Theorem~1 in~\cite{KeoghL:05} that holds in our special case.
Note that the bound in part (b) can be interpreted as $(\mathsf{snr}-1)^{-1}$, where $\mathsf{snr}$ is one way to quantify the \textit{signal-to-noise ratio} in our setup.
By Theorem~\ref{thm.sine wave}, when $\mathsf{snr}$ is larger, we can expect the spectral clustering--optimal centroids to be more sinusoidal.

Before proving Theorem~\ref{thm.sine wave}, we highlight the significance of our choice $m:=pw+w-1$.
Suppose $p=2$, $w=3$, and $e(t)=0$ for all $t$.
Then $m=8$, and $x$ takes the form
\[
\left[\begin{array}{cccccccc}
\alpha & \beta & \gamma & 
\alpha & \beta & \gamma & 
\alpha & \beta 
\end{array}\right].
\]
The length-$w$ windows of $x$ are then given by the columns of the matrix
\[
\left[\begin{array}{ccc|ccc}
\alpha & \beta & \gamma & \alpha & \beta & \gamma \\ 
\beta & \gamma & \alpha & \beta & \gamma & \alpha \\ 
\gamma & \alpha & \beta & \gamma & \alpha & \beta
\end{array}\right].
\]
In general, if $x$ is $w$-periodic with length $m$, then we end up with $pw$ windows that can be partitioned into $p$ equal batches of size $w$, with each batch consisting of all cyclic permutations of the first window.
We will exploit this structure to demonstrate the emergence of sinusoidal centroids.

\begin{proof}[Proof of Theorem~\ref{thm.sine wave}]
For (a), we have
\[
\mu(t)
=a-\frac{1}{pw}\sum_s e(s),
\]
where the sum is over $s\in\{0,\ldots,t-1\}\cup\{m-w+t+1,\ldots,m-1\}$.
(Here and throughout, we make use of identities like $\sum_{t=0}^{w-1}\sin\frac{2\pi(t-c)}{w}=0$, which can be deduced by interpreting the left-hand side as the imaginary part of a vanishing geometric sum.)
Thus,
\[
|\mu(t+1)-\mu(t)|
=\frac{1}{pw}|e(m-w+t+1)-e(t)|
\leq\frac{2}{pw}\|e\|_\infty.
\]
For (b), we use the fact that each $\mu_i$ resides in the $2$-dimensional affine subspace given by principal component analysis.
Consider the matrix $X$ whose column vectors belong to $\mathcal{X}$, and center these vectors to get $B:=X-\mu1^\top$.
It will be useful to consider the matrices $A$ and $E$ defined by
\[
A_{s,t}=b\sin\frac{2\pi(s+t-c)}{w},
\qquad
E_{s,t}=a+e(s+t)-\mu(s),
\]
since this allows for the decomposition $B=A+E$, where $A$ is ``structured'' and $E$ is ``small''.
In particular, $AA^\top$ is a circulant matrix:
\[
(AA^\top)_{s,s'}
=\sum_{t=0}^{pw-1}b^2\sin\frac{2\pi(s+t-c)}{w}\sin\frac{2\pi(s'+t-c)}{w}
=\frac{b^2pw}{2}\cos\frac{2\pi(s-s')}{w}.
\]
Thus, the eigenvalues of $AA^\top$ are given by the discrete Fourier transform of $\frac{b^2pw}{2}\cos\frac{2\pi\cdot}{w}$, from which one may conclude that $AA^\top$ is $\frac{b^2pw^2}{4}$ times the orthogonal projection onto the span of $\cos\frac{2\pi\cdot}{w}$ and $\sin\frac{2\pi\cdot}{w}$.
Next, we show that $E$ is small:
\[
|E_{s,t}|
=|a+e(s+t)-\mu(s)|
\leq|e(s+t)|+|a-\mu(s)|
\leq|e(s+t)|+\frac{1}{p}\|e\|_\infty,
\]
which implies
\[
|E_{s,t}|^2
\leq|e(s+t)|^2+2\cdot|e(s+t)|\cdot\frac{1}{p}\|e\|_\infty+\frac{1}{p^2}\|e\|_\infty^2
\leq|e(s+t)|^2+\frac{3}{p}\|e\|_\infty^2,
\]
and so
\[
\|E\|_F^2
\leq w\|e\|_2^2+3w^2\|e\|_\infty^2.
\]
Thus, Wedin's $\sin\Theta$ theorem~\cite{Wedin:72} and Weyl's inequality for singular values gives that the desired chordal distance is at most
\[
\frac{\|B-A\|_F}{\sigma_2(B)}
\leq\frac{\|E\|_F}{\sigma_2(A)-\|E\|_F}
=\bigg(\frac{\sigma_2(A)}{\|E\|_F}-1\bigg)^{-1}
\leq\Bigg(\frac{|b|}{\sqrt{\frac{4}{pw}\|e\|_2^2+\frac{12}{p}\|e\|_\infty^2}}-1\Bigg)^{-1}.
\qedhere
\]
\end{proof}

Keogh and Lin~\cite{KeoghL:05} used the cylinder--bell--funnel example to illustrate their observation that clustering sliding windows is meaningless.
In retrospect, the primary reason for meaninglessness in this case is the underlying symmetry:
These sliding windows can be viewed as noisy versions of a single orbit under the action of cyclic permutations.
Meanwhile, if we were to cluster the orbit directly, any centroids that emerge would have the same $k$-means value as their cyclic permutations.
This is an instance of a more general phenomenon: 
\begin{center}
\textit{Clustering nearly symmetric data is meaningless.}
\end{center}
For example, run $k$-means clustering on thousands of independent realizations of a spherical Gaussian random vector in $\mathbb{R}^2$.
Since the underlying distribution is rotation invariant, we can expect any rotation of the $k$-means-optimal centroids to perform similarly well as cluster centroids.

\section{Large windows have interval clusters}

Empirically, we observe that when the window is a sizable fraction of the timeseries, the optimal clusters are frequently intervals.
As an example, we performed $1000$ trials of the following experiment:
Draw a random walk of length $m=100$, and then for each $w\in\{1,2,\ldots,m-k+1\}$, run $k$-means with $k=3$ on all $w$-windows of the random walk, and record whether all of the resulting clusters are intervals.
Here's a plot of the proportion of trials that produced interval clusters for each $w$:
\begin{center}
\includegraphics[width=0.7\textwidth,trim={2.5cm 11.5cm 2.5cm 11.5cm},clip]{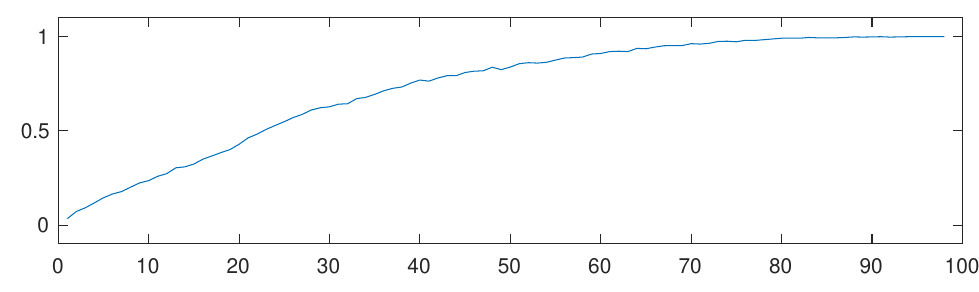}
\end{center}
In this section, we provide some theoretical explanation for this phenomenon.

Consider independent random variables $Z_1,\ldots,Z_{m-1}$ with zero mean and unit variance, and define a random length-$m$ timeseries $x$ by
\[
x(t)
=\sum_{i=1}^{t}Z_i.
\]
(Notably, $x(0)=0$.)
Take $n:=m-w+1$, and for each $s\in[n]$, let $x_s$ denote the window
\[
\left[\begin{array}{cccc}
x(s) & x(s+1) & \cdots & x(s+w-1)
\end{array}\right].
\]
We are interested in the partition $C_1\sqcup\cdots \sqcup C_k=[n]$ that minimizes the $k$-means objective:
\[
\sum_{i=1}^k\frac{1}{|C_i|}\sum_{s,s'\in C_i}\|x_s-x_{s'}\|^2.
\]
When $w$ is large, we expect the $k$-means objective of each partition to concentrate at its expectation (especially when the $Z_i$'s have subgaussian distribution), and so the quantity to minimize becomes a slight perturbation of
\[
\sum_{i=1}^k\frac{1}{|C_i|}\sum_{s,s'\in C_i}\mathbb{E}\|x_s-x_{s'}\|^2.
\]
This is the setting of our main result in this section:

\begin{theorem}
\label{thm.random walk}
For the random $\{x_s\}_{s\in[n]}$ defined above, the partitions $C_1\sqcup\cdots \sqcup C_k=[n]$ that minimize
\[
\sum_{i=1}^k\frac{1}{|C_i|}\sum_{s,s'\in C_i}\mathbb{E}\|x_s-x_{s'}\|^2
\]
are precisely those for which each $C_i$ is an interval of length $\lfloor\frac{n}{k}\rfloor$ or $\lceil\frac{n}{k}\rceil$.
\end{theorem}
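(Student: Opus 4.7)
The plan is to reduce the stochastic objective to a purely combinatorial one and then to minimize it via a two-step ``interval-then-balanced-sizes'' argument.

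First I would compute $\mathbb{E}\|x_s-x_{s'}\|^2$ explicitly. Since the $Z_i$ are independent with unit variance, $x(t')-x(t)$ for $t<t'$ is a sum of $t'-t$ such variables, so $\mathbb{E}|x(t')-x(t)|^2=t'-t$; summing coordinatewise over the $w$ entries of $x_s-x_{s'}$ (all of which have the same gap $|s-s'|$) yields $\mathbb{E}\|x_s-x_{s'}\|^2=w|s-s'|$. The objective therefore simplifies, up to the overall factor $w$, to
\[
\sum_{i=1}^k\frac{1}{|C_i|}\sum_{s,s'\in C_i}|s-s'|,
\]
a deterministic function of the partition only.

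Next I would prove the key combinatorial lemma: for any $C\subseteq[n]$ of size $N$ with elements listed in increasing order as $s_1<\cdots<s_N$,
\[
\sum_{s,s'\in C}|s-s'|=2\sum_{1\leq a<b\leq N}(s_b-s_a)\geq 2\sum_{1\leq a<b\leq N}(b-a)=\frac{N(N^2-1)}{3},
\]
where the inequality uses $s_b-s_a\geq b-a$ for distinct increasing integers, with equality throughout iff $C$ is a block of consecutive integers. Dividing by $|C_i|=N_i$ and summing over $i$ gives the lower bound $\frac{1}{3}\sum_i(N_i^2-1)$ on the simplified objective, with equality iff every $C_i$ is an interval. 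Since the $C_i$ already partition $[n]$, the equality condition is equivalent to saying the partition consists of $k$ consecutive intervals.

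To finish, I would minimize $\sum_i N_i^2$ over positive integer compositions $N_1+\cdots+N_k=n$. A one-step smoothing does the job: whenever $N_i\geq N_j+2$, the swap $(N_i,N_j)\mapsto(N_i-1,N_j+1)$ strictly decreases $\sum_i N_i^2$, so the minimizers are exactly those compositions with parts in $\{\lfloor n/k\rfloor,\lceil n/k\rceil\}$. Combining the two equality conditions yields the claimed characterization of optimal partitions. There is no genuinely hard step here: the argument is a chain of elementary reductions, and the main insight is simply to compute the expectation first, so that the randomness disappears and the objective becomes a manifestly interval-favoring linear functional of the gaps $|s-s'|$. The only subtle check is that the two equality conditions (intervals and balanced sizes) can be realized simultaneously, but this is witnessed trivially by any partition of $[n]$ into consecutive blocks of the required sizes.
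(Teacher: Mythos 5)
Your proposal is correct and follows essentially the same route as the paper: the reduction $\mathbb{E}\|x_s-x_{s'}\|^2=w|s-s'|$, the combinatorial lemma $\sum_{s,s'\in C}|s-s'|\geq\frac{N(N^2-1)}{3}$ with equality exactly for intervals, and the smoothing argument minimizing $\sum_i N_i^2$ over compositions are precisely the paper's Lemma~\ref{lem.var is dist}, Lemma~\ref{lem.intervals are best}, and the balancing step in its proof of Theorem~\ref{thm.random walk}. Your closing remark that the two equality conditions are simultaneously realizable by consecutive blocks of balanced sizes is a nice explicit touch that the paper handles implicitly by noting all such choices share the same objective value.
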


Before proving this theorem, we present a couple of helpful lemmas.

\begin{lemma}
\label{lem.var is dist}
$\mathbb{E}\|x_s-x_{s'}\|^2=w|s-s'|$.
\end{lemma}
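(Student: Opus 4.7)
The plan is a direct variance calculation. Fix $s, s' \in [n]$ and expand coordinatewise:
\[
\|x_s - x_{s'}\|^2 = \sum_{j=0}^{w-1} \bigl(x(s+j) - x(s'+j)\bigr)^2.
\]
Without loss of generality assume $s > s'$ (the case $s = s'$ is trivial and the case $s < s'$ is symmetric). Then from the definition $x(t) = \sum_{i=1}^t Z_i$, the telescoping difference satisfies
\[
x(s+j) - x(s'+j) = \sum_{i=s'+j+1}^{s+j} Z_i,
\]
which is a sum of exactly $s - s'$ of the independent, mean-zero, unit-variance $Z_i$'s.

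The second step is to take expectations. Since the $Z_i$'s are independent with variance $1$, the variance of the displayed sum equals the number of summands, i.e. $s - s'$, and the mean is zero. Hence
\[
\mathbb{E}\bigl[(x(s+j) - x(s'+j))^2\bigr] = s - s'
\]
for every $j \in \{0, 1, \ldots, w-1\}$. Summing over the $w$ coordinates gives $\mathbb{E}\|x_s - x_{s'}\|^2 = w(s-s') = w|s-s'|$, which is the claim.

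There is no real obstacle here; the only things to verify are (i) the index range for the telescoping sum, which gives exactly $|s-s'|$ summands, and (ii) that independence across distinct $Z_i$'s lets us add variances with no cross terms. The result does not use subgaussianity or any distributional property beyond the first two moments of the $Z_i$'s, so the lemma holds in the stated generality.
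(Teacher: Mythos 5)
Your proof is correct and follows essentially the same route as the paper's: telescoping the difference $x(s+j)-x(s'+j)$ into a sum of $|s-s'|$ independent mean-zero unit-variance increments, then using independence (vanishing cross terms) to get expectation $|s-s'|$ per coordinate and summing over the $w$ coordinates. The paper just writes the cross-term cancellation explicitly as a double sum $\sum_i\sum_j \mathbb{E}Z_iZ_j$ rather than invoking additivity of variance, which is the same computation.
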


\begin{proof}
Without loss of generality, we have $s'<s$, in which case
\begin{align*}
\mathbb{E}\|x_s-x_{s'}\|^2
=\mathbb{E}\sum_{t=0}^{w-1}\Big(x(s+t)-x(s'+t)\Big)^2
&=\mathbb{E}\sum_{t=0}^{w-1}\bigg(\sum_{i=s'+t+1}^{s+t}Z_i\bigg)^2\\
&=\sum_{t=0}^{w-1}\sum_{i=s'+t+1}^{s+t}\sum_{j=s'+t+1}^{s+t}\mathbb{E}Z_iZ_j
=w(s-s'),
\end{align*}
and the result follows.
\end{proof}

\begin{lemma}
\label{lem.intervals are best}
For every subset $C\subseteq\mathbb{Z}$ of size $r$, it holds that
\[
\sum_{s,s'\in C}|s-s'|
\geq \frac{r(r+1)(r-1)}{3},
\]
with equality precisely when $C$ is an interval.
\end{lemma}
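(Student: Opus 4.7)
The plan is to sort the elements of $C$ as $c_1<c_2<\cdots<c_r$, exploit symmetry to rewrite
$\sum_{s,s'\in C}|s-s'|=2\sum_{i<j}(c_j-c_i)$,
and then express each pairwise difference as a telescoping sum over consecutive gaps $d_k:=c_{k+1}-c_k$, each of which is a positive integer by distinctness.

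Next, I would interchange the order of summation: since $c_j-c_i=\sum_{k=i}^{j-1}d_k$, a gap $d_k$ contributes to the pair $(i,j)$ exactly when $i\leq k<j$, giving $k(r-k)$ such pairs. Thus
$\sum_{i<j}(c_j-c_i)=\sum_{k=1}^{r-1}k(r-k)\,d_k$.
Applying the bound $d_k\geq 1$ and the identity
$\sum_{k=1}^{r-1}k(r-k)=r\cdot\tfrac{(r-1)r}{2}-\tfrac{(r-1)r(2r-1)}{6}=\tfrac{r(r-1)(r+1)}{6}$
yields the desired lower bound after doubling.

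For the equality claim, note that the only inequality invoked is $d_k\geq 1$ for each $k$; equality throughout forces every gap to be $1$, which is exactly the condition that $C$ is a set of consecutive integers, i.e., an interval. There is no real obstacle in this argument; the only step requiring care is the reindexing that produces the combinatorial weight $k(r-k)$, after which everything reduces to a routine closed-form sum.
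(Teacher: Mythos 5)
Your proof is correct, and it takes a somewhat different route from the paper's. The paper sorts $C$ as $s_0<\cdots<s_{r-1}$, invokes the term-wise comparison $|s_i-s_j|\geq|i-j|$ (valid since the elements are distinct integers), and thereby reduces the problem to evaluating $\sum_{i,j\in[r]}|i-j|$ for the extremal interval configuration. You instead decompose each difference into consecutive gaps $d_k=c_{k+1}-c_k$ and derive the exact identity
\[
\sum_{s,s'\in C}|s-s'|=2\sum_{k=1}^{r-1}k(r-k)\,d_k,
\]
then apply $d_k\geq 1$. The two arguments rest on the same underlying fact (gaps between consecutive distinct integers are at least $1$ --- indeed, the paper's inequality $|s_i-s_j|\geq|i-j|$ is proved by telescoping over gaps), but your bookkeeping buys something extra: the identity shows precisely how the excess over the minimum depends on the gaps, with central gaps weighted more heavily by the factor $k(r-k)$, and it makes the equality case immediate since each weight $k(r-k)$ is strictly positive for $1\leq k\leq r-1$. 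The paper's version is slightly shorter because the comparison to the interval case requires no reindexing, at the cost of leaving the equality analysis implicit in the statement ``with equality precisely when $C$ is an interval.'' Your closed-form evaluation $\sum_{k=1}^{r-1}k(r-k)=\frac{r(r-1)(r+1)}{6}$ and the final doubling both check out, so the argument is complete.
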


\begin{proof}
Take $C=\{s_0,\ldots,s_{r-1}\}$ with $s_0<\cdots<s_{r-1}$.
Then $|s_i-s_j|\geq |i-j|$ for all $i,j\in[r]$, and so
\[
\sum_{s,s'\in C}|s-s'|
=\sum_{i,j\in[r]}|s_i-s_j|
\geq\sum_{i,j\in[r]}|i-j|,
\]
with equality precisely when $C$ is an interval.
Standard summation identities simplify this lower bound:
\[
\sum_{i,j\in[r]}|i-j|
=2\sum_{i=1}^{r-1}\sum_{j=1}^i j
=\sum_{i=1}^{r-1}(i^2+i)
=\frac{r(r-1)(2r-1)}{6}+\frac{r(r-1)}{2}
=\frac{r(r+1)(r-1)}{3}.
\qedhere
\]
\end{proof}

\begin{proof}[Proof of Theorem~\ref{thm.random walk}]
By Lemma~\ref{lem.var is dist}, we seek to minimize
\[
\sum_{i=1}^k\frac{1}{|C_i|}\sum_{s,s'\in C_i}|s-s'|,
\]
and Lemma~\ref{lem.intervals are best} implies that the minimizers have the property that each $C_i$ is an interval.
It remains to establish that for each $i$, the size $r_i$ of $C_i$ is $\lfloor\frac{n}{k}\rfloor$ or $\lceil\frac{n}{k}\rceil$.
By Lemma~\ref{lem.intervals are best}, the quantity to minimize is
\[
\sum_{i=1}^k\frac{1}{r_i}\cdot\frac{r_i(r_i+1)(r_i-1)}{3}
=\frac{1}{3}\bigg(\sum_{i=1}^k r_i^2-k\bigg)
\]
subject to the constraints that each $r_i$ is a positive integer and $r_1+\cdots+r_k=n$.

If $r_i>r_j+1$, then it's better to take $r_i\leftarrow r_i-1$ and $r_j\leftarrow r_j+1$:
\[
(r_i-1)^2+(r_j+1)^2
=r_i^2+r_j^2-2(r_i-r_j-1)
<r_i^2+r_j^2.
\]
As such, $(r_1,\ldots,r_k)$ is suboptimal if any two coordinates differ by more than $1$.
The only remaining options take each $r_i$ to be $\lfloor\frac{n}{k}\rfloor$ or $\lceil\frac{n}{k}\rceil$.
Furthermore, the constraint $r_1+\cdots+r_k=n$ forces $n\bmod k$ of the coordinates to equal $\lceil\frac{n}{k}\rceil$ and the other coordinates to equal $\lfloor\frac{n}{k}\rfloor$.
All of these remaining choices have the same value, so they all must be optimal.
\end{proof}

\section{Discussion}

In this paper, we highlighted a few phenomena that arise when clustering sliding windows.
We now point out two observations that warrant further investigation.

First, Theorems~\ref{thm.flat centroids} and~\ref{thm.sine wave} are both restrictive in how to select $k$.
In the setting of Theorem~\ref{thm.flat centroids}, we empirically observe flat cluster centroids when $k>2$, and in the setting of Theorem~\ref{thm.sine wave}, we similarly observe sinusoidal clusters when $k>3$.
However, to prove both theorems, we argue by relating the covariance matrix of the data to another matrix of some rank $r$, and so we can only prove guarantees for spectral clustering with $k\leq r+1$.
Of course, if the rank of the covariance matrix is \textit{exactly} $r$, then the optimal clustering centroids necessarily reside in the $r$-dimensional affine subspace implicated by principal component analysis, even if $k>r+1$.
Is there a way to estimate the proximity of $k>r+1$ optimal clustering centroids to this affine subspace in general?

Second, we pointed out that clustering nearly symmetric data is meaningless, but it would be interesting to determine the extent to which this explains the meaninglessness observed by Keogh and Lin~\cite{KeoghL:05}.
In particular, is it always the case that when sliding windows exhibit meaningless clustering, there exists an approximate symmetry that acts on the windows?
And is this approximate symmetry necessarily given by cyclic permutations?

\section*{Acknowledgments}

DGM was supported in part by NSF DMS 2220304.

\end{document}